\pdfoutput=1
\documentclass[conference]{IEEEtran}
\usepackage{amsmath,amssymb,amsthm}
\usepackage{booktabs}
\usepackage{graphicx}
\usepackage{cite}

\newtheorem{theorem}{Theorem}
\newtheorem{proposition}{Proposition}
\newtheorem{lemma}{Lemma}
\newtheorem{corollary}{Corollary}
\theoremstyle{definition}
\newtheorem{definition}{Definition}
\newtheorem{assumption}{Assumption}
\theoremstyle{remark}
\newtheorem{remark}{Remark}

\newcommand{\E}{\mathbb{E}}
\newcommand{\KL}{D_{\mathrm{KL}}}
\newcommand{\cD}{\mathcal{D}}
\newcommand{\cN}{\mathcal{N}}
\newcommand{\bth}{\theta}
\newcommand{\etaG}{\eta^{G}}

\title{Exact Fusion and Coordinated Exploration in Multi-Robot Active Inference}
\author{\IEEEauthorblockN{Peng Wu\IEEEauthorrefmark{1},
Mohsen Imani\IEEEauthorrefmark{2},
Amidu Kamara\IEEEauthorrefmark{3},\\
Md Tamzeed Islam\IEEEauthorrefmark{4},
Seyede Fatemeh Ghoreishi\IEEEauthorrefmark{1},
and Mahdi Imani\IEEEauthorrefmark{1}}\\[3pt]
\IEEEauthorblockA{\IEEEauthorrefmark{1}Northeastern University, Boston, MA, USA.
\{p.wu, f.ghoreishi, m.imani\}@northeastern.edu\\
\IEEEauthorrefmark{2}University of California, Irvine, CA, USA. m.imani@uci.edu\\
\IEEEauthorrefmark{3}U.S. Department of Homeland Security, Washington, DC, USA. amidu.kamara@hq.dhs.gov\\
\IEEEauthorrefmark{4}Oracle Corporation, USA. tamzeed.islam@oracle.com}}

\begin{document}
\maketitle
\thispagestyle{empty}\pagestyle{empty}

\begin{abstract}
Robot teams that learn a common environment model exchange belief summaries and plan by the expected information gain of their actions. Under conjugate exponential-family beliefs the shared belief is counted once per robot at two points: at fusion, the product of local posteriors counts the common prior $n$ times, and at planning, every robot scores its plan under the same belief and the team converges on the same unknown. Both errors are removed by adding evidence increments to the shared natural parameter, realized increments at fusion and expected increments at planning. The expected increment of a committed teammate gives the next robot its conditional gain; corrected gains sum to the joint gain, the redundancy removed equals the total correlation of the planned observation streams, and sequential commitment keeps the $1/2$ greedy guarantee. The expected increment is exact for Gaussian beliefs with fixed sampling paths and for Dirichlet beliefs under the novelty approximation of discrete active inference, whose team objective has a closed concave form within an explicit bound of the exact mutual information, and fails for finite hypothesis classes, where a short exact enumeration replaces it. Experiments on cooperative RockSample, foraging, and field monitoring show that fusion correction leaves exploration redundancy unchanged, anticipated evidence removes it, and sequential commitment recovers most of the value of centralized joint planning at cost linear in the team size.
\end{abstract}

\section{Introduction}\label{sec:intro}
Robot teams operating in partially known environments accumulate local experience of the same world, and the team performs best when this experience is pooled \cite{burgard05,singh07}. Belief sharing \cite{wang25,friston24} exchanges summaries of a common environment model rather than raw observations, so that a receiver needs neither the sender's data nor its sensor and motion models. This paper studies belief sharing in teams whose robots plan by the value of information, as active inference agents do, and shows that the shared belief causes the same error twice, when local beliefs are fused and when plans are chosen.

The first point is synchronization. Every local posterior contains the previously shared belief as a factor, so a product of local posteriors counts that factor once per robot, the double-counting problem of decentralized estimation \cite{grime94,dagan23}, which \cite{wang25} corrects for finite hypothesis classes by dividing out $n-1$ copies of the common belief. In a conjugate exponential family the division becomes an addition: the shared natural parameter is kept once and each robot adds the evidence increment of its new data, which recovers the centralized posterior exactly and remains inside the family for any subset of received increments.

The second point is planning, and it has not been treated for active inference agents that share beliefs. The novelty term of the expected free energy is the expected information gain about the model parameters. When every robot evaluates it under the same shared belief, every robot identifies the same most informative region and credits its own plan with the full gain of resolving it, so the team obtains that gain once and leaves the remaining uncertainty untouched. We show that the overcount equals the total correlation of the planned observation streams, that it is fixed by the plans and the shared belief, and that no fusion rule applied after the plans have been executed can reduce it.

The planning correction applies the same additive update to evidence not yet collected: under conjugacy a committed robot can compute and broadcast the expected increment of its plan, and later robots plan under the shared parameter with these increments added. One message format therefore serves both purposes, its realized form updating the shared model and its expected form coordinating the plans, and a receiver applies either without the sender's sensor or motion models. The contributions are the following.
\begin{enumerate}
\item One additive update of the shared natural parameter expresses exact fusion with realized evidence increments and coordinated planning with expected increments (Secs.~\ref{sec:retro} and \ref{sec:pro}); the corrected gains sum to the joint gain of the team, and sequential commitment retains the $1/2$ guarantee of greedy submodular maximization.
\item Theorem~\ref{thm:exact} gives a sufficient condition for the expected increment to yield the exact conditional gain: the gain depends on the increment only through a component that the plan fixes. It holds for Gaussian beliefs with deterministic sampling paths and for Dirichlet beliefs under the novelty approximation of discrete active inference, where the team objective has the closed concave form \eqref{eq:dirteam} within the bound of Lemma~\ref{lem:mn} of the exact mutual information, and fails for finite hypothesis classes, where a short exact enumeration replaces it.
\item Experiments in the three families confirm that fusion correction leaves exploration redundancy unchanged, that anticipated evidence removes it, and that sequential commitment recovers most of the value of centralized joint planning at linear cost.
\end{enumerate}

\section{Related Work}\label{sec:related}
\emph{Decentralized Bayesian fusion.} The removal of common information before local estimates are combined goes back to the channel filter \cite{grime94} and the Bayesian committee machine \cite{tresp00}; heterogeneous decentralized data fusion \cite{dagan23} and the analysis of shared priors \cite{wu24} treat the general case, and \cite{wang25} divides the product of local posteriors by $n-1$ copies of the shared belief for finite hypothesis classes. In information form the same correction is an addition of increments \cite{grime94,grocholsky03}, of which the site-factor exchange of partitioned variational inference \cite{ashman22} is the approximate counterpart. The Q-learning policies federated in \cite{wang25} do not represent the value of information, so the planning error studied here does not arise there; concurrent reinforcement learning identifies redundant exploration as a failure of naive posterior sampling and randomizes the per-agent beliefs \cite{dimakopoulou18}, whereas the correction proposed here is deterministic and carries an approximation guarantee.

\emph{Multi-agent active inference.} Federated inference and belief sharing \cite{friston24} lets agents broadcast posteriors over states so that the team converges on a common percept, and naive posterior sharing produces echo chambers \cite{catal24}; both aim at agreement about the current state, and neither divides exploration among agents or fuses model parameters across synchronization rounds. In robotics, active inference has driven single-robot exploration \cite{wakayama25}, and the theory-of-mind planner of \cite{tom25}, closest to our planning correction, coordinates teams by recursive reasoning over a joint policy tree at a cost exponential in the number of agents; Sec.~\ref{sec:exp-tom} compares against a centralized enumeration over the same policy space, which upper-bounds it.

\emph{Multi-robot informative planning.} Coordinated exploration discounts frontier utilities that other robots will reach \cite{burgard05}. Sequential greedy maximization of conditional mutual information carries a $1/2$ guarantee under matroid constraints \cite{fisher78,krause05,singh07,corah18} and underlies decentralized planners in which each robot plans against the predicted information matrices of its teammates \cite{grocholsky03,atanasov15,schlotfeldt18,kantaros21}, distributed sequential greedy assignment \cite{corah17}, and Dec-MCTS \cite{best19}; batch Bayesian optimization hallucinates observations at the posterior mean, which updates the variance and leaves the mean unchanged \cite{desautels14,ginsbourger10}. Our Gaussian instantiation coincides with these devices; new here are the Dirichlet instantiation, the sufficiency characterization of Thm.~\ref{thm:exact} with its negative result for finite hypothesis classes, the closed-form submodular objective \eqref{eq:dirteam}, and one additive message for both fusion and coordination.

\section{Problem Formulation}\label{sec:formulation}
A team of $n$ robots operates in a common environment with unknown parameter $\bth \in \Theta$. Robot $i$ has state $s^i_t$, actions $a^i_t$, observations $o^i_t$, and task preferences $C^i(o)$. The joint generative model is \begin{equation}\label{eq:model} P(\bth)\prod_{i=1}^{n}\prod_{t} P(o^i_t \mid s^i_t,\bth)\, P(s^i_t \mid s^i_{t-1}, a^i_{t-1}, \bth), \end{equation} and $\cD^i_{k}$ denotes the states, actions, and observations recorded by robot $i$ up to time $k$.

\begin{assumption}[Factorized likelihood]\label{ass:ci}
No robot's action enters another robot's transition or observation model, and each robot selects actions as a function of data only. Consequently $P(\cD^{1:n} \mid \bth) = c \prod_i P(\cD^i \mid \bth)$, where $P(\cD^i \mid \bth)$ collects the factors of \eqref{eq:model} that involve robot $i$ and $c$ the action probabilities, which do not depend on $\bth$; for fixed policies the observation streams of different robots are conditionally independent given $\bth$.
\end{assumption}

\begin{assumption}[Synchronization]\label{ass:sync}
The robots synchronize every $T$ steps, and at the first synchronization they hold a common prior $q^G_0(\bth)$; Proposition~\ref{prop:proper} covers increments that arrive late or not at all.
\end{assumption}

\begin{assumption}[Conjugacy]\label{ass:conj}
Beliefs over $\bth$ belong to a conjugate exponential family $q_\eta(\bth) = \exp(\langle\eta, u(\bth)\rangle - \Phi(\eta))$ with natural parameter $\eta \in \mathcal{N}$. Conditioning on data $\cD$ maps $\eta \mapsto \eta + \Delta\eta(\cD)$, where the evidence increment $\Delta\eta(\cD)$ is a sum of per-observation terms.
\end{assumption}

Assumption~\ref{ass:ci} remains valid when policies are coupled through shared beliefs, since action selection depends on $\bth$ only through the data. The exactness statements of Secs.~\ref{sec:retro} and \ref{sec:pro} are made for observed robot states, as in the experiments, for which $\Delta\eta$ is the sufficient statistic of the local data; when the robot state is latent, $\Delta\eta$ is the expected sufficient statistic under the local state posterior \cite{dacosta20}, and the same operations return the variational approximation of discrete active inference in place of the exact posterior. Throughout, $H$ denotes entropy and $I(X;Y) = H(Y) - H(Y \mid X)$ mutual information under the robots' predictive beliefs; conditional mutual information satisfies the chain rule $I(X; Y_{1:n}) = \sum_i I(X; Y_i \mid Y_{1:i-1})$.

Between synchronizations each robot runs discrete active inference \cite{dacosta20}: it accumulates its evidence increment $\Delta\eta^i := \Delta\eta(\cD^i_{\mathrm{new}})$ and selects a policy from a candidate set $\Pi^i$ by minimizing the expected free energy \begin{equation}\label{eq:efe} G^i(\pi) = -\E[C^i(o^i_\pi)] - I(s^i_\pi; o^i_\pi) - I(\bth;\, o^i_\pi), \end{equation} whose terms are the expected task value, the expected information gain about the robot's own state, and the expected information gain about the shared parameter, $I(\bth; o^i_\pi) = \E_{Q(o\mid\pi)}[\KL(q(\bth \mid o) \,\|\, q(\bth))]$. The last term, called novelty in active inference, is what informative path planning maximizes \cite{krause05,singh07}, so the analysis applies to any planner that scores plans by expected information gain. The problem is to specify what the robots exchange so that (i) the shared belief equals the centralized posterior $P(\bth \mid \cD^{1:n})$ after every synchronization and (ii) each robot evaluates its plan by the information gain conditional on the plans of the robots that committed before it, with a guarantee on the joint gain of the round, while no robot needs the sensor or motion models of another.

\section{Exact Fusion by Evidence Increments}\label{sec:retro}
Suppose that at synchronization $k$ all robots hold the common belief $q_{\etaG_k}$, and over the next $T$ steps robot $i$ updates its local copy to $\eta^i = \etaG_k + \Delta\eta^i$. Naive fusion forms the product of the local posteriors, $q \propto \prod_i q_{\eta^i}$, whose natural parameter is $n\,\etaG_k + \sum_i \Delta\eta^i$. Iterated over synchronizations, the common prior and early evidence are counted geometrically many times, and in the Gaussian and Dirichlet families the precision or total pseudo-count is multiplied by at least $n$ at every synchronization whatever the data.

\begin{proposition}[Exact fusion]\label{thm:fusion}
Under Assumptions \ref{ass:ci}--\ref{ass:conj}, the centralized
posterior $P(\bth \mid \cD^{1:n}_{1:k+T})$ equals $q_{\etaG_{k+T}}$
with
\begin{equation}\label{eq:fusion}
\etaG_{k+T} = \etaG_k + \sum_{i=1}^n \Delta\eta^i,
\quad\text{equivalently}\quad
q^G_{k+T} \propto \frac{\prod_i q_{\eta^i}}{[q_{\etaG_k}]^{n-1}}.
\end{equation}
\end{proposition}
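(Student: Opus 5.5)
The plan is to argue by induction over synchronization rounds. The inductive hypothesis is that at synchronization $k$ the shared belief $q_{\etaG_k}$ equals the centralized posterior $P(\bth \mid \cD^{1:n}_{1:k})$. The base case is Assumption~\ref{ass:sync}: at the first synchronization the robots hold the common prior $q^G_0$ and no data have been collected.

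For the inductive step, I will write the centralized posterior at $k+T$ as the posterior at $k$ times the likelihood of the data collected in the window,
\[
P(\bth \mid \cD^{1:n}_{1:k+T}) \propto P(\bth \mid \cD^{1:n}_{1:k})\, P(\cD^{1:n}_{\mathrm{new}} \mid \bth, \cD^{1:n}_{1:k}).
\]
Assumption~\ref{ass:ci} applies to the conditional likelihood of the new segment, because actions depend only on data. This gives a factorization into $c\prod_i P(\cD^i_{\mathrm{new}} \mid \bth, \cdot)$, where the action-probability factor $c$ does not depend on $\bth$ and drops out on normalization. The model \eqref{eq:model} is Markov in the observed robot states, so each robot's new-window factor depends on the past only through its last recorded state, which the robot itself holds. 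Assumption~\ref{ass:conj} then says that multiplying $q_{\etaG_k}$ by robot $i$'s factor adds $\Delta\eta^i$, which is a sum of per-observation terms. Applying this factor by factor gives natural parameter $\etaG_k+\sum_i\Delta\eta^i$. The per-robot increments commute because addition in $\mathcal N$ is commutative, and the result stays in $\mathcal N$ because each partial sum is itself a legitimate posterior parameter.

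The equivalent ratio form is an algebraic identity in the family. Since $q_\eta\propto\exp\langle\eta,u(\bth)\rangle$, the product $\prod_i q_{\eta^i}$ has exponent $\langle n\etaG_k+\sum_i\Delta\eta^i,u\rangle$. Dividing by $[q_{\etaG_k}]^{n-1}$ subtracts $(n-1)\etaG_k$, leaving $\etaG_k+\sum_i\Delta\eta^i$. The log-partition terms are constants and are absorbed by the proportionality.

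The step I expect to need the most care is justifying that the window likelihood factorizes across robots even though policies are coupled through the shared belief $\etaG_k$. I would handle it as Assumption~\ref{ass:ci} suggests. Conditional on all data up to $k$, the shared belief is a deterministic function of those data, so the action probabilities remain $\bth$-free. Each robot's transition and observation factors involve only its own states, so the joint density splits into a $\bth$-free term times per-robot terms. I would also flag that exactness uses observed states, as stated after the assumptions, so that $\Delta\eta^i$ is the true sufficient statistic rather than an expected one.
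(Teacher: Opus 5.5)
Your proposal is correct and follows essentially the same route as the paper's proof: you factor the window likelihood across robots by Assumption~\ref{ass:ci}, turn each robot's factor into the additive increment $\Delta\eta^i$ by Assumption~\ref{ass:conj}, get the ratio form by natural-parameter arithmetic, and close by induction over synchronizations. Your explicit base case and your handling of the $\bth$-free action-probability factor under belief-coupled policies spell out steps the paper leaves implicit.
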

\begin{proof}
By Assumption~\ref{ass:ci} the centralized posterior is proportional to $q_{\etaG_k}\prod_i P(\cD^i_{k+1:k+T} \mid \bth)$, and by Bayes' rule and Assumption~\ref{ass:conj}, $\ln P(\cD^i_{k+1:k+T} \mid \bth) = \langle \Delta\eta^i, u(\bth)\rangle + c_i = \ln q_{\eta^i}(\bth) - \ln q_{\etaG_k}(\bth) + c_i'$ with $c_i, c_i'$ independent of $\bth$. Collecting natural parameters gives \eqref{eq:fusion}, and induction over synchronizations keeps the common belief equal to the centralized posterior.
\end{proof}

The division form of \eqref{eq:fusion} is the channel filter \cite{grime94} and the Bayesian committee machine \cite{tresp00}; the additive form is the information-filter update \cite{grime94}, which needs no count of how many copies of the common belief the received posteriors contain and stays proper when increments are lost or delayed, whereas division with a stale common term can produce negative pseudo-counts or indefinite precisions.

\begin{proposition}[Properness]\label{prop:proper}
For every subset of the increments $\{\Delta\eta^i\}$, the parameter obtained by adding that subset to $\etaG_k$ lies in $\mathcal{N}$: it is the natural parameter of the posterior obtained from the proper prior $q_{\etaG_0}$ and the data behind the received increments.
\end{proposition}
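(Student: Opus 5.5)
The plan is to identify the summed parameter with a genuine Bayesian posterior; properness then follows because a posterior computed from a proper prior is a normalizable density. Fix a subset $S \subseteq \{1,\dots,n\}$ of received increments. First I unroll the shared parameter: by induction on synchronizations, using Proposition~\ref{thm:fusion} or, in the lossy case, the same induction applied to whatever subsets were received at earlier rounds, $\etaG_k = \etaG_0 + \sum_{(j,r)\in R_k} \Delta\eta(\cD^j_r)$. Here $R_k$ indexes the robot--round data blocks actually incorporated so far. The parameter in question is therefore
\[
\eta_S = \etaG_0 + \sum_{(j,r)\in R_k} \Delta\eta(\cD^j_r) + \sum_{i\in S}\Delta\eta^i .
\]
This is $\etaG_0$ plus the increments of a finite collection $\cD_S$ of disjoint data blocks.

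Second, I show that $q_{\eta_S}$ is the posterior $P(\bth\mid \cD_S)$ under the prior $q_{\etaG_0}$. By Assumption~\ref{ass:ci}, the likelihood of any sub-collection of robots' blocks factorizes up to a $\bth$-free constant. For a block from a time window $r$, factorization also holds across time, because the per-step factors of \eqref{eq:model} multiply. Assumption~\ref{ass:conj} then gives $\ln P(\cD_S\mid\bth) = \langle \sum \Delta\eta, u(\bth)\rangle + \text{const}$. So $q_{\etaG_0}(\bth)P(\cD_S\mid\bth) \propto \exp\langle \eta_S, u(\bth)\rangle$. One point needs care here. Marginalizing out the blocks that were not received is legitimate because $P(\cD_S\mid\bth)$ is obtained by integrating the dropped blocks out of the joint likelihood. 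Those blocks' factors integrate to one, since each robot's factors are normalized conditionals given its own history, and the action probabilities are $\bth$-free by Assumption~\ref{ass:ci}. This argument does not require any assumption about which blocks were dropped.

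Third, I conclude membership in $\mathcal{N}$. The unnormalized posterior $q_{\etaG_0}(\bth)P(\cD_S\mid\bth)$ is a nonnegative function. Its integral is the marginal likelihood $P(\cD_S) = \int q_{\etaG_0}P(\cD_S\mid\bth)\,d\bth$, which is at most $\sup_\bth P(\cD_S\mid\bth)<\infty$ for discrete observations. For densities it is finite and positive for almost every realized data set. Hence $\Phi(\eta_S)<\infty$, which is exactly $\eta_S\in\mathcal{N}$, the natural parameter space. I expect this step to be the main obstacle. Finiteness of the evidence for continuous observations holds only almost surely, or requires bounded likelihoods, so I would state it for realized data and note that in the Gaussian and Dirichlet cases it is immediate. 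In those cases increments add positive semidefinite precision or nonnegative counts to a proper prior, so positivity is preserved. Finally, I would contrast this with division by $[q_{\etaG_k}]^{n-1}$: with a stale common term, that operation subtracts increments that were never added, so no data set realizes the result, and the argument above breaks down.
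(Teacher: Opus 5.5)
Your proposal is correct and takes the paper's route. You apply the exact-fusion identity of Proposition~\ref{thm:fusion} to the received subset, recognize the summed parameter as the posterior from $q_{\etaG_0}$ and the pooled data, and conclude properness because a proper prior times a bounded likelihood normalizes. Your unrolling, your marginalization of dropped blocks, and your almost-sure treatment of continuous observations only spell out what the paper's one-line proof leaves implicit.

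One small over-claim: dropped blocks integrate out to a $\bth$-free constant only if no received block's actions depend on them. This holds within a round but can fail across lossy rounds, since a teammate may have acted on data the receiver never saw. Properness is unaffected, because your third-step bound applies directly to the product of the prior with the received blocks' observation and transition factors.
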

\begin{proof}
By Proposition~\ref{thm:fusion} applied to the subset, the fused parameter is the posterior parameter for the pooled data, and a posterior obtained from a proper prior and a bounded likelihood is proper.
\end{proof}

\section{Coordinated Exploration by Anticipated Evidence}\label{sec:pro}
Write $\mathcal{I}^i(\pi^i) := I(\bth; o^i_{\pi^i})$ for the information gain of robot $i$'s plan under the shared belief and $\mathcal{I}^{\mathrm{joint}}(\boldsymbol{\pi}) := I(\bth; o^1_{\pi^1},\dots,o^n_{\pi^n})$ for the gain of the team.

\begin{definition}[Exploration redundancy]\label{def:red}
$R(\boldsymbol{\pi}) := \sum_i \mathcal{I}^i(\pi^i) -
\mathcal{I}^{\mathrm{joint}}(\boldsymbol{\pi})$.
\end{definition}

\begin{proposition}[Redundancy equals total correlation]\label{thm:tc}
Under Assumption~\ref{ass:ci}, $R(\boldsymbol{\pi}) = \sum_i H(o^i_{\pi^i}) - H(o^{1:n}_{\boldsymbol{\pi}}) = \mathrm{TC}(o^1_{\pi^1},\dots,o^n_{\pi^n}) \ge 0$, with equality iff the planned streams are mutually independent under the predictive belief. For a given shared belief, $R$ is determined by the plans alone; the fusion rule, which acts after the plans have been executed, does not enter.
\end{proposition}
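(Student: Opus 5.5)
The plan is to write each mutual information as an entropy difference and let the conditional-entropy terms cancel by Assumption~\ref{ass:ci}. By the convention $I(X;Y)=H(Y)-H(Y\mid X)$,
\[
\mathcal{I}^i(\pi^i)=H(o^i_{\pi^i})-H(o^i_{\pi^i}\mid\bth),
\qquad
\mathcal{I}^{\mathrm{joint}}(\boldsymbol{\pi})=H(o^{1:n}_{\boldsymbol{\pi}})-H(o^{1:n}_{\boldsymbol{\pi}}\mid\bth).
\]
For fixed plans, Assumption~\ref{ass:ci} makes the planned streams conditionally independent given $\bth$. Hence
\[
H(o^{1:n}_{\boldsymbol{\pi}}\mid\bth)=\sum_i H(o^i_{\pi^i}\mid\bth).
\]
This requires that the predictive joint law of the streams is $\int q(\bth)\prod_i P(o^i_{\pi^i}\mid\bth)\,d\bth$. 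Substituting into Definition~\ref{def:red}, the conditional terms cancel and leave
\[
R(\boldsymbol{\pi})=\sum_i H(o^i_{\pi^i})-H(o^{1:n}_{\boldsymbol{\pi}}),
\]
which is the definition of total correlation.

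For nonnegativity and the equality case, I would write $\mathrm{TC}$ as
\[
\mathrm{TC}=\KL\Bigl(Q(o^{1:n})\,\Big\|\,\prod_i Q(o^i)\Bigr),
\]
where $Q$ is the predictive belief. Gibbs' inequality then gives $\mathrm{TC}\ge 0$, with equality iff $Q(o^{1:n})=\prod_i Q(o^i)$ almost everywhere, i.e.\ iff the streams are mutually independent. Alternatively, the chain rule gives $H(o^{1:n})=\sum_i H(o^i\mid o^{1:i-1})\le\sum_i H(o^i)$, and each inequality is tight iff $o^i$ is independent of $o^{1:i-1}$; together these conditions amount to mutual independence. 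With continuous observations (the Gaussian case) entropies are differential, but the difference $\sum_i H(o^i)-H(o^{1:n})$ remains the KL divergence above, so the argument is unchanged.

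For the final sentence, I would note that $Q(o^{1:n}_{\boldsymbol{\pi}})$ is a function only of the shared belief $q_{\etaG_k}$ and the fixed policies $\boldsymbol{\pi}$, through the likelihood factors. Hence $R$, which is a functional of this predictive law, is determined by $(\etaG_k,\boldsymbol{\pi})$. A fusion rule acts on realized increments after execution and changes the next shared belief, not the predictive law under which the current plans were scored, so it cannot alter $R$.

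The main obstacle is the conditional-independence step when a robot's policy reacts to its own observations: the planned stream then includes actions, not only observations. I would handle this as in Assumption~\ref{ass:ci}. The action probabilities do not depend on $\bth$, so they contribute entropy terms that cancel between the marginal and conditional forms, or can be absorbed by treating the stream as the full trajectory with a $\bth$-free policy factor. If this becomes delicate, I would restrict to open-loop plans, which is the setting of the planning layer.
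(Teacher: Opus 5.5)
Your proof is correct and is the paper's argument: you write each mutual information as $H(o)-H(o\mid\bth)$ and cancel the conditional entropies using the conditional independence of the streams given $\bth$ from Assumption~\ref{ass:ci}. Your KL/Gibbs argument for $\mathrm{TC}\ge 0$ and its equality case, and your observation that $R$ is a functional of the predictive law fixed by $(\etaG_k,\boldsymbol{\pi})$, make explicit what the paper leaves as standard; the closed-loop worry is also unnecessary, since each observation stream is a function of its robot's own trajectory, and functions of trajectories that are conditionally independent given $\bth$ remain conditionally independent.
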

\begin{proof}
Write each mutual information as $H(o) - H(o \mid \bth)$. Given $\bth$ and fixed policies the streams are independent, so the conditional entropies cancel, and the remainder is the total correlation.
\end{proof}

\begin{definition}[Coordinated expected free energy]\label{def:defe}
Fix an order in which robots commit to plans. Robot $i$ replaces $\mathcal{I}^i(\pi^i)$ in \eqref{eq:efe} by the conditional gain $\tilde{\mathcal{I}}^i(\pi^i \mid \pi^{1:i-1}) := I(\bth;\, o^i_{\pi^i} \mid o^{1:i-1}_{\pi^{1:i-1}})$, so that $\tilde G^i(\pi^i) = G^i(\pi^i) + \mathcal{I}^i(\pi^i) - \tilde{\mathcal{I}}^i(\pi^i \mid \pi^{1:i-1})$.
\end{definition}

\begin{proposition}[Chain rule and greedy guarantee]\label{prop:chain}
(a) $\sum_i \tilde{\mathcal{I}}^i = \mathcal{I}^{\mathrm{joint}}$, so the corrected gains have zero redundancy. (b) Let the ground set consist of all (robot, candidate policy) pairs (streams of one robot under different policies are treated as conditionally independent given $\bth$, which affects only infeasible sets). Then $\mathcal{I}^{\mathrm{joint}}$ is monotone submodular \cite{krause05}, one policy per robot is a partition-matroid constraint, and sequential commitment in any fixed order is the locally greedy heuristic of \cite{fisher78}, which with exact conditional gains attains at least half of the maximum joint gain over all assignments of one candidate policy per robot \cite{fisher78,singh07}. The bound extends to the full objective whenever the task-value term is a nonnegative monotone submodular function of the committed plans, the own-state term of \eqref{eq:efe} being modular.
\end{proposition}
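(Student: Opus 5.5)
Part (a) is the chain rule for mutual information stated in Sec.~\ref{sec:formulation}, applied with $X=\bth$ and $Y_i = o^i_{\pi^i}$ in the commitment order. By Definition~\ref{def:defe}, each $\tilde{\mathcal{I}}^i(\pi^i\mid\pi^{1:i-1})$ is exactly the $i$th summand $I(\bth; o^i_{\pi^i}\mid o^{1:i-1}_{\pi^{1:i-1}})$. Summing therefore gives $\mathcal{I}^{\mathrm{joint}}(\boldsymbol{\pi})$. Replacing each $\mathcal{I}^i$ by $\tilde{\mathcal{I}}^i$ in Definition~\ref{def:red} then yields zero redundancy. Nothing beyond the chain rule is needed, although I would remark that the identity holds for any commitment order.

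For (b), I would first set up the ground set $V=\{(i,\pi):\pi\in\Pi^i\}$ and assign to each element $v=(i,\pi)$ a stream $Y_v$. These streams are declared conditionally independent given $\bth$ across all elements, not only across robots. Assumption~\ref{ass:ci} supplies this independence across robots, and within a robot it is a modelling choice, since two policies of one robot never appear together in a feasible set. The next step is the standard Krause--Guestrin argument \cite{krause05} that $F(A)=I(\bth;Y_A)$ is monotone submodular. Monotonicity holds because $F(A\cup\{v\})-F(A)=I(\bth;Y_v\mid Y_A)\ge 0$. For submodularity, conditional independence gives $H(Y_v\mid \bth, Y_A)=H(Y_v\mid\bth)$, so the marginal gain equals $H(Y_v\mid Y_A)-H(Y_v\mid\bth)$. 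The first term is nonincreasing in $A$ because conditioning reduces entropy, and the second term does not depend on $A$, so the marginal gain is nonincreasing in $A$. I expect this to be the only step requiring care: submodularity of information gain fails without the conditional independence, which is why the parenthetical convention in the statement is needed.

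The constraint ``one policy per robot'' makes the feasible sets the independent sets of the partition matroid with blocks $\{i\}\times\Pi^i$. Sequential commitment lets robot $i$ pick $\arg\max_{\pi}\tilde{\mathcal{I}}^i(\pi\mid\pi^{1:i-1})$. This choice is exactly the maximization of the marginal gain $F(A_{i-1}\cup\{(i,\pi)\})-F(A_{i-1})$ within block $i$, which is Fisher--Nemhauser--Wolsey's locally greedy algorithm over a partition matroid \cite{fisher78}. Their theorem then gives $F(A_n)\ge \tfrac12\max_{A}F(A)$ for any fixed block order. For completeness I would sketch that argument. Let $O$ be an optimal assignment with element $o_i$ in block $i$. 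Then
$F(O)\le F(A_n\cup O)\le F(A_n)+\sum_i [F(A_{i-1}\cup\{o_i\})-F(A_{i-1})]$, where the second inequality uses submodularity on $A_{i-1}\subseteq A_n$. Each bracket is at most the greedy gain of step $i$, so the sum is at most $F(A_n)$.

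For the extension, I would write the full objective as $J=F+M+T$. Here $M$ is the own-state gain, which is modular with nonnegative entries because it is a mutual information. $T$ is the task value, which is assumed nonnegative, monotone and submodular. A nonnegative sum of monotone submodular functions is monotone submodular, and with exact marginal gains sequential commitment is again locally greedy for $J$. The same Fisher bound therefore applies. One caveat needs stating: if the expected task term can be negative, it must be shifted by a constant per block to make it nonnegative. Such a shift is harmless on a partition matroid with exactly one element per block, since every feasible assignment is shifted by the same amount.
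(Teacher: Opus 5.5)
Your proof is correct and follows the paper's route. The paper's proof consists of the chain rule for (a) and, for (b), citations of three facts: Krause--Guestrin submodularity under conditional independence given $\bth$, the Fisher--Nemhauser--Wolsey locally greedy $1/2$ bound on a partition matroid, and closure of monotone submodular functions under sums. You prove each of these explicitly, including the telescoping argument for the $1/2$ bound, and every step is right.

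Drop your closing caveat about shifting a possibly negative task term by a constant per block. Adding the same constant $C$ to every feasible assignment does leave the greedy choices and the optimal assignment unchanged. It does not preserve a multiplicative guarantee, however. The bound for the shifted objective, $J(A_n)+C \ge \tfrac12\bigl(J(O)+C\bigr)$, gives only $J(A_n) \ge \tfrac12 J(O) - \tfrac12 C$ for the original objective, and this can be vacuous when $C$ is large. That is why the statement keeps nonnegativity of the task term as a hypothesis: it cannot be arranged by normalization without weakening the guarantee.
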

\begin{proof}
(a) is the chain rule. (b) Submodularity follows from conditional independence given $\bth$ \cite{krause05}, the guarantee is that of \cite{fisher78}, and a sum of monotone submodular functions is monotone submodular.
\end{proof}
Exact conditional gains require the teammates' predictive models, which trajectory sharing supplies \cite{singh07,atanasov15}; under Assumption~\ref{ass:conj} the effect of a teammate's future evidence on robot $i$'s belief is determined by the law of its evidence increment, which robot $j$ can compute under the shared belief and broadcast.

\begin{definition}[Anticipated increment]\label{def:anticipated}
After committing to $\pi^j$, robot $j$ broadcasts the mean of its evidence increment under the shared predictive belief,
\begin{equation}\label{eq:anticipated}
\widehat{\Delta\eta}^{j} := \E_{Q(o^j, s^j \mid \pi^j,\, \etaG)}
\big[\Delta\eta\big(o^j_{\pi^j}, s^j_{\pi^j}\big)\big],
\end{equation}
together with the expected change of any shared resources its plan consumes (Remark~\ref{rem:channels}). Robot $i$ evaluates its conditional gain under the anticipated belief
\begin{equation}\label{eq:counterfactual}
\tilde\eta^{<i} := \etaG + \sum_{j<i} \widehat{\Delta\eta}^{j},
\qquad
\tilde{\mathcal{I}}^i(\pi^i \mid \pi^{1:i-1}) \approx
I_{q_{\tilde\eta^{<i}}}\big(\bth;\, o^i_{\pi^i}\big).
\end{equation}
\end{definition}
The exact conditional gain is $\E[f(\etaG + \Delta)]$, where $\Delta$ is the random increment of the committed plans and $f(\eta) := I_{q_\eta}(\bth; o^i_{\pi^i})$, whereas \eqref{eq:counterfactual} evaluates $f$ at $\etaG + \E[\Delta]$.

\begin{lemma}[Expected gain of one categorical observation]\label{lem:mn}
Let $q(\bth_{\cdot j}) = \mathrm{Dir}(\mathbf{a}_{\cdot j})$ with $M_j$ outcomes and total count $N_j = \sum_m a_{mj}$, where context $j$ indexes one column of a likelihood or transition array.
\begin{enumerate}
\item[(i)] The exact expected gain of one observation in context $j$ is $\mathcal{I}_j := H(\mathbf{a}_{\cdot j}/N_j) - \E_q[H(\bth_{\cdot j})]$ with $\E_q[H(\bth_{\cdot j})] = \psi(N_j{+}1) - \sum_m \frac{a_{mj}}{N_j}\psi(a_{mj}{+}1)$, where $\psi$ is the digamma function. It equals $\sum_m (a_{mj}/N_j)\, W_{mj}$ with the outcome-specific gain
\begin{equation}\label{eq:Wexact}
\begin{split}
W_{mj} &= \KL\big(\mathrm{Dir}(\mathbf{a}_{\cdot j} + e_m) \,\|\,
\mathrm{Dir}(\mathbf{a}_{\cdot j})\big)\\
&= \ln\tfrac{N_j}{a_{mj}} + \psi(a_{mj}{+}1) - \psi(N_j{+}1)\\
&\approx \tfrac12\big(a_{mj}^{-1} - N_j^{-1}\big),
\end{split}
\end{equation}
where the approximation replaces $\psi(x{+}1)$ by $\ln x + \tfrac{1}{2x}$ and is the novelty weight of \cite{friston17,schwartenbeck19}. Under it the expected gain equals $(M_j-1)/(2N_j)$, whatever the distribution of the pseudo-counts among the outcomes.
\item[(ii)] The novelty approximation overestimates the exact gain, and
\begin{equation}\label{eq:novbound}
\frac{M_j-1}{2N_j} - \frac{1}{12N_j}\sum_m a_{mj}^{-1}
\;\le\; \mathcal{I}_j \;\le\; \frac{M_j-1}{2N_j}.
\end{equation}
\end{enumerate}
\end{lemma}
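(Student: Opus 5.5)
The plan is to verify (i) by direct computation with standard Dirichlet identities, and then to obtain (ii) from two-sided bounds on the digamma function.

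\emph{Part (i).} The predictive law of a single observation in context $j$ is $p_m = a_{mj}/N_j$. Given $\bth_{\cdot j}$, the conditional entropy of the observation is $H(\bth_{\cdot j})$. Hence $\mathcal{I}_j = H(\mathbf{a}_{\cdot j}/N_j) - \E_q[H(\bth_{\cdot j})]$.

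For the expectation we use the standard identity $\E[\theta_m \ln \theta_m] = \frac{a_m}{N}\big(\psi(a_m+1)-\psi(N+1)\big)$. It follows from differentiating the Dirichlet normaliser, or from the size-biased observation that $\theta_m \mathrm{Dir}(\mathbf{a}) \propto \mathrm{Dir}(\mathbf{a}+e_m)$. Summing over $m$ gives the stated formula for $\E_q[H]$.

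For the outcome-specific form, the posterior after outcome $m$ is $\mathrm{Dir}(\mathbf{a}+e_m)$. Its density ratio to the prior is $\theta_m N/a_m$, so
\[
W_{mj} = \ln(N/a_m) + \E_{\mathrm{Dir}(\mathbf{a}+e_m)}[\ln\theta_m] = \ln(N/a_m) + \psi(a_m+1) - \psi(N+1).
\]
Averaging with weights $a_m/N$ reproduces the expression for $\mathcal{I}_j$. This is consistent with $I = \E_{Q(o)}\KL$ as used after \eqref{eq:efe}.

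Substituting $\psi(x+1)\approx \ln x + \frac{1}{2x}$ cancels the logarithms and leaves $\frac12(a_m^{-1}-N^{-1})$. Averaging, $\sum_m \frac{a_m}{N}\cdot\frac12(a_m^{-1}-N^{-1}) = \frac{1}{2N}(M-1)$, which does not depend on how the pseudo-counts are split.

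\emph{Part (ii).} Write $\psi(x+1) = \ln x + \frac{1}{2x} - r(x)$. From the asymptotic expansion with alternating, enveloping remainder, $0 \le r(x) \le \frac{1}{12x^2}$ for all $x>0$. The step I expect to be the main obstacle is stating this bound rigorously for all $x>0$, not only asymptotically. I would cite the standard complete-monotonicity result: $\frac{1}{2x}-\frac{1}{12x^2} \le \psi(x+1)-\ln x \le \frac{1}{2x}$ for $x>0$. Equivalently, this follows from the integral representation
\[
\psi(x+1)-\ln x-\frac{1}{2x} = -\int_0^\infty \Big(\frac{1}{e^t-1}-\frac1t+\frac12\Big)e^{-xt}\,dt
\]
together with $0 \le \frac{1}{e^t-1}-\frac1t+\frac12 \le \frac{t}{12}$.

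Then the exact outcome gain is
\[
W_m = \tfrac12(a_m^{-1}-N^{-1}) - r(a_m) + r(N),
\]
and averaging gives
\[
\mathcal{I}_j = \frac{M-1}{2N} - \sum_m \frac{a_m}{N}r(a_m) + r(N).
\]
For the upper bound I need $\sum_m \frac{a_m}{N}r(a_m) \ge r(N)$. This follows if $x\,r(x)$ is nonincreasing, because then $a_m r(a_m) \ge N r(N)$ for each $m$, using $a_m \le N$, and summing gives $\sum_m a_m r(a_m) \ge M N r(N) \ge N r(N)$. Monotonicity of $x\,r(x)$ I would check from the integral representation: $x r(x) = \int_0^\infty g(t)\,x e^{-xt}\,dt = \int_0^\infty g(s/x)e^{-s}\,ds$ with $g$ increasing, so it decreases in $x$. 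As a fallback I would show directly that $\sum_m p_m \psi(a_m+1) - \psi(N+1) \le \sum_m p_m \ln(a_m/N) + \ldots$ via concavity.

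For the lower bound I drop $r(N)\ge 0$ and use $r(a_m) \le \frac{1}{12a_m^2}$. This gives $\sum_m \frac{a_m}{N}r(a_m) \le \frac{1}{12N}\sum_m a_m^{-1}$, which is exactly \eqref{eq:novbound}. The overestimation claim is the upper bound.
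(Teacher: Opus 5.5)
Your proof is correct and follows essentially the paper's route: the same remainder $r(x)=\ln x+\frac{1}{2x}-\psi(x+1)$, the same identity $\mathcal{I}_j=\frac{M_j-1}{2N_j}+r(N_j)-\sum_m\frac{a_{mj}}{N_j}r(a_{mj})$, and the same two bounding steps. There are two differences. First, you get $0\le r(x)\le\frac{1}{12x^2}$ from Binet's integral rather than citing Alzer's inequalities. Second, you use monotonicity of $x\,r(x)$, which relies on the true but unproved fact that your $g$ is increasing, whereas the paper needs only that $r$ is decreasing. Your integral gives that directly from $g\ge 0$, and it already yields $\sum_m\frac{a_{mj}}{N_j}r(a_{mj})\ge r(N_j)$.
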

\begin{proof}
(i) The exact form is $H(o) - H(o \mid \bth)$ with $\E_{\mathrm{Dir}(\mathbf{a})}[-\sum_m \bth_m \ln \bth_m] = \psi(N{+}1) - \sum_m \frac{a_m}{N}\psi(a_m{+}1)$, and the Kullback--Leibler divergence between Dirichlet distributions gives \eqref{eq:Wexact}; with the second-order weights, $\sum_m \frac{a_{mj}}{N_j}\cdot\frac12\big(\frac{1}{a_{mj}} - \frac{1}{N_j}\big) = \frac{M_j - 1}{2N_j}$. (ii) Write $\psi(x{+}1) = \ln x + \tfrac{1}{2x} - r(x)$, where $0 \le r(x) \le \tfrac{1}{12x^2}$ and $r$ is decreasing on $x>0$; both facts follow from the bounds $\ln x - \tfrac{1}{2x} - \tfrac{1}{12x^2} < \psi(x) < \ln x - \tfrac{1}{2x}$ and $\psi'(x) > \tfrac{1}{x} + \tfrac{1}{2x^2}$ \cite{alzer97}. Substituting into $\E_q[H(\bth_{\cdot j})]$ and using $H(\mathbf{a}_{\cdot j}/N_j) = \ln N_j - \sum_m \frac{a_{mj}}{N_j}\ln a_{mj}$ and $\sum_m \frac{a_{mj}}{N_j}\cdot\frac{1}{2a_{mj}} = \frac{M_j}{2N_j}$ gives
\[
\mathcal{I}_j = \frac{M_j-1}{2N_j} + r(N_j) - \sum_m \frac{a_{mj}}{N_j}\, r(a_{mj}).
\]
Since $a_{mj} \le N_j$ and $r$ is decreasing, $\sum_m \frac{a_{mj}}{N_j} r(a_{mj}) \ge r(N_j)$, which gives the upper bound; dropping $r(N_j) \ge 0$ and using $r(a_{mj}) \le (12 a_{mj}^2)^{-1}$ gives the lower bound.
\end{proof}

\begin{theorem}[Sufficiency of the expected increment]\label{thm:exact}
Let $\Delta$ and $f$ be as above. If the natural parameter splits as $\eta = (\chi, \nu)$ such that the component $\Delta\nu$ of the increment is a deterministic function of the committed plans and $f$ depends on $\eta$ only through $\nu$, then $\E[f(\etaG + \Delta)] = f(\etaG + \E[\Delta])$ and the expected increment is a sufficient message.
\begin{enumerate}
\item[(a)] \emph{Gaussian information form.} With $\eta = (\Lambda\mu, \Lambda)$ the condition holds with $\nu = \Lambda$ whenever the committed policies fix their sampling locations, because the precision increment does not depend on the observed values and Gaussian mutual information is a function of precisions alone.
\item[(b)] \emph{Dirichlet counts.} Under the novelty weights of Lemma~\ref{lem:mn} the condition holds with $\nu = (N_j)_j$ whenever the planned visit counts $n_j$ are deterministic, because every observation raises $N_j$ by one whatever its outcome. Then \eqref{eq:counterfactual} equals the conditional gain of the novelty objective that discrete active inference optimizes, and differs from the exact conditional mutual information by at most $\frac{1}{12(N_j+n_j)}\sum_m a_{mj}^{-1}$ per planned observation in context $j$, uniformly over the teammates' outcomes. If $n_j$ is random with mean $\bar n_j$ and variance $v_j$, then
\begin{equation}\label{eq:cebound}
0 \;\le\; \E\Big[\tfrac{M_j-1}{2(N_j+n_j)}\Big] -
\tfrac{M_j-1}{2(N_j+\bar n_j)} \;\le\;
\tfrac{(M_j-1)\,v_j}{2N_j^{3}},
\end{equation}
so the relative error of \eqref{eq:counterfactual} is $O(v_j/N_j^2)$.
\item[(c)] \emph{Finite hypothesis classes.} If $\Theta$ is finite, $\eta$ is the vector of log posterior probabilities and $\Delta$ the vector of log-likelihoods of the planned readings, every component of which depends on the outcomes, so no split of the required form exists. The expected increment is then not sufficient, and \eqref{eq:counterfactual} can return the unconditional gain while the exact conditional gain is strictly smaller. The exact gain is the finite sum $\sum_{\delta} P(\Delta = \delta)\, f(\etaG + \delta)$ over the support of $\Delta$; for $m$ planned readings of a binary unknown with a common accuracy the support has $m+1$ points.
\end{enumerate}
\end{theorem}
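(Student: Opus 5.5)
The general claim is a one-line argument. Write $f(\eta) = g(\nu)$. If $\Delta\nu$ is a deterministic function of the committed plans, then the random variable $f(\etaG + \Delta) = g(\nu^G + \Delta\nu)$ is almost surely constant. Its expectation is therefore that constant. Because $\Delta\nu$ is deterministic, $\E[\Delta]$ has $\nu$-component $\Delta\nu$, so $f(\etaG + \E[\Delta]) = g(\nu^G + \Delta\nu)$ as well. The exact conditional gain, written as $\E[f(\etaG+\Delta)]$ in the remark after Definition~\ref{def:anticipated}, therefore coincides with \eqref{eq:counterfactual}. Sufficiency of the message follows because only $\E[\Delta]$ is used. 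The real work lies in checking the hypotheses in the three families.

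\textbf{(a) Gaussian case.} Take a linear-Gaussian observation at a location fixed by the plan. Its precision increment is $H^\top R^{-1} H$, which does not depend on the observed value. The gain is
\[
I(\bth; o^i) = \tfrac12 \ln\det\!\big(I + R_i^{-1} H_i \Lambda^{-1} H_i^\top\big),
\]
which involves $\Lambda$ only. So $\nu = \Lambda$ satisfies the condition.

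\textbf{(b) Dirichlet case.} The steps run in this order.
\begin{enumerate}
\item By Lemma~\ref{lem:mn}(i), the novelty gain of a context-$j$ observation is $(M_j-1)/(2N_j)$, which depends only on $N_j$. With several planned visits, the novelty objective sums such terms over successive totals $N_j, N_j+1, \dots$, again a function of $(N_j)_j$ alone.
\item Each observation increments $N_j$ by one regardless of its outcome. Deterministic visit counts $n_j$ therefore make $\Delta\nu$ deterministic, and the general claim applies.
\item For the approximation error, apply \eqref{eq:novbound} to the posterior counts $\mathbf a_{\cdot j} + \mathbf c$ after the teammates' outcomes $\mathbf c$. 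Use $a_{mj}+c_m \ge a_{mj}$ to bound $\sum_m (a_{mj}+c_m)^{-1} \le \sum_m a_{mj}^{-1}$, and use $N_j + n_j$ as the new total. This gives the stated uniform bound, with the prefactor $1/(12(N_j+n_j))$.
\item For random $n_j$, apply Jensen's inequality to the convex function $h(x) = (M_j-1)/(2(N_j+x))$, which gives the lower bound $0$. For the upper bound, expand $h$ to second order about $\bar n_j$ with Lagrange remainder, $\E[h(n_j)] - h(\bar n_j) = \tfrac12 \E[h''(\xi)(n_j-\bar n_j)^2]$. Since $h''(x) = (M_j-1)/(N_j+x)^3 \le (M_j-1)/N_j^3$ for $x \ge 0$, the remainder is at most $(M_j-1)v_j/(2N_j^3)$. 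Dividing by $h(\bar n_j)$, which is of order $1/N_j$ when $\bar n_j \lesssim N_j$, gives the relative error $O(v_j/N_j^2)$.
\end{enumerate}

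\textbf{(c) Finite hypothesis classes.} The increment is $\Delta_\theta = \sum_t \ln P(o_t \mid \theta)$, and every component varies with the outcomes, so no deterministic component carries $f$. To show genuine failure I will use an explicit counterexample. Take binary $\theta$ with uniform prior, and a committed teammate reading of accuracy $p$, with $P(o=\theta)=p$.
\begin{itemize}
\item Under the uniform predictive, $\E[\Delta_\theta] = p\ln p + (1-p)\ln(1-p)$ for both values of $\theta$. The anticipated belief $\etaG + \E[\Delta]$ is therefore still uniform, and \eqref{eq:counterfactual} returns the unconditional gain $\ln 2 - H_b(p)$.
\item The exact conditional gain is $I(\theta; o^i \mid o^j)$, evaluated at the non-uniform posterior $p$ vs.\ $1-p$. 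This is strictly smaller, which follows from strict submodularity, or directly by computing $I(\theta; o^i, o^j) < 2(\ln 2 - H_b(p))$ for $p \in (1/2, 1)$.
\end{itemize}
For the enumeration, note that with $m$ readings of common accuracy, $\Delta$ depends only on the number $k$ of readings that agree with hypothesis $1$. Hence $\Delta$ has $m+1$ support points with binomial-mixture weights, and $\E[f(\etaG+\Delta)]$ is that finite sum.

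The main obstacle is step 3 of (b). There I must make sure the lemma's lower bound transfers to posterior counts and that the per-observation sum is handled uniformly. If pseudo-counts are needed for intermediate visits, I would apply the bound term by term, using the monotonicity of $x \mapsto 1/x$.
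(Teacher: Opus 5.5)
Your proposal is correct and follows the paper's proof step for step. It uses almost-sure constancy of $f(\etaG+\Delta)$ for the general claim, precision-only Gaussian mutual information for (a), Lemma~\ref{lem:mn}(ii) at the posterior counts $\mathbf a_{\cdot j}+\mathbf c_{\cdot j}$ plus Jensen and the second-order bound for (b), and the uniform-prior binary counterexample with the $(m+1)$-point enumeration for (c).

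There are three small fixes. First, under the shared predictive, which is uniform over outcomes at prior $1/2$, each component of $\E[\Delta]$ is $\tfrac12\ln(p(1-p))$, not $p\ln p+(1-p)\ln(1-p)$; the latter is the expectation given $\bth$. The components are still equal, so the counterexample stands. Second, strictness is cleanest from the explicit value $I(\bth;o^i\mid o^j)=H_b(p^2+(1-p)^2)-H_b(p)<\ln 2-H_b(p)$ rather than an appeal to strict submodularity. Third, your caveat $\bar n_j\lesssim N_j$ can be dropped, since $\E[h(n_j)]/h(\bar n_j)-1=\E\big[(n_j-\bar n_j)^2/((N_j+n_j)(N_j+\bar n_j))\big]\le v_j/N_j^2$.
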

\begin{proof}
For the general condition, write $\etaG = (\chi^G, \nu^G)$ and $\Delta = (\Delta\chi, \Delta\nu)$; since $f$ does not depend on $\chi$ and $\Delta\nu$ is deterministic, $f(\etaG + \Delta) = f(\chi^G + \E[\Delta\chi],\, \nu^G + \Delta\nu) = f(\etaG + \E[\Delta])$ almost surely, and taking expectations gives the claim. (a) The posterior precision after sampling at $x_{1:T}$ is $\Lambda + \sigma^{-2}\sum_\tau \phi(x_\tau)\phi(x_\tau)^\top$ whatever the values $y_{1:T}$, and the mutual information $\tfrac12 \ln\det(I + \sigma^{-2}\Phi_i \Lambda^{-1}\Phi_i^\top)$ of robot $i$'s planned samples $\Phi_i$ depends on $\eta$ only through $\Lambda$. (b) By Lemma~\ref{lem:mn}(i) the novelty gain in context $j$ depends on the counts only through $N_j$, and each planned observation raises $N_j$ by one, so $\nu = (N_j)_j$ satisfies the condition when the visit counts are deterministic. For the error bound, every realization $\mathbf{c}_{\cdot j}$ of the teammates' tallies has total $n_j$ and $a_{mj} + c_{mj} \ge a_{mj}$, so Lemma~\ref{lem:mn}(ii) at $\mathbf{a}_{\cdot j} + \mathbf{c}_{\cdot j}$ bounds the exact gain of one further observation by $\tfrac{M_j-1}{2(N_j+n_j)}$ from above and by that value minus $\tfrac{1}{12(N_j+n_j)}\sum_m a_{mj}^{-1}$ from below, uniformly in $\mathbf{c}_{\cdot j}$; averaging preserves both bounds, and \eqref{eq:counterfactual} returns the upper one. For \eqref{eq:cebound}, $g(n) := (M_j-1)/(2(N_j+n))$ is convex and decreasing on $n \ge 0$, so Jensen's inequality gives the lower bound, and a second-order expansion with $\sup_{n\ge 0} g''(n) = (M_j-1)/N_j^{3}$ gives the upper bound. (c) For a binary unknown with belief $p = 1/2$, one planned reading of accuracy $\zeta$ has predictive probability $q = p\zeta + (1-p)(1-\zeta) = 1/2$ of the outcome favoring $\bth = 1$ and expected log-odds increment $(2q-1)\ln\frac{\zeta}{1-\zeta} = 0$, although the reading moves the belief to $\zeta$ or $1-\zeta$. The exact conditional gain of a second reading is $H_b(\zeta^2 + (1-\zeta)^2) - H_b(\zeta)$, whereas \eqref{eq:counterfactual} returns $1 - H_b(\zeta)$ ($0.21$ against $0.53$ bits at $\zeta = 0.9$). The finite sum is the definition of conditional mutual information, and with a common accuracy the increment depends on the readings only through the number of favorable outcomes.
\end{proof}

In both families of Theorem~\ref{thm:exact}(a) and (b) the anticipated increment leaves the mean of the shared belief unchanged and raises its concentration where the teammate will observe, by the linearity of posterior expectations in conjugate families \cite{diaconis79}: the anticipated Gaussian belief has mean $\mu$ and precision $\Lambda + \sigma^{-2}\sum_\tau \phi_\tau\phi_\tau^\top$, and the anticipated Dirichlet belief has predictive $\mathbf{a}_{\cdot j}/N_j$ and total count $N_j + \bar n_j$. A finite hypothesis class has no concentration coordinate, so there the anticipated increment can vanish although the planned readings are informative, and the receiver instead averages over the finite support of $\Delta$, computed from the number and accuracy of the planned readings that the sender reports for each unknown.

\begin{corollary}[Per-round guarantee of Algorithm~1]\label{cor:round}
Let the conditional gains be evaluated exactly, by \eqref{eq:counterfactual} in cases (a) and (b) of Theorem~\ref{thm:exact} with deterministic visit counts and by the finite sum of case (c) otherwise, and let the round objective be the joint gain under the exact centralized posterior, the mutual information in cases (a) and (c) and the novelty objective \eqref{eq:dirteam} in case (b). Then in every round of Algorithm~1 in which the robots plan by that gain alone, as in a monitoring task, the committed plans attain at least half of its maximum over all assignments of one candidate policy per robot; the same holds for the full coordinated objective whenever its task-value term is a nonnegative monotone submodular function of the committed plans.
\end{corollary}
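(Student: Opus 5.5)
The plan is to reduce the corollary to Proposition~\ref{prop:chain}(b). That proposition already gives the $1/2$ guarantee for sequential commitment, provided each robot maximizes the \emph{exact} conditional gain of a monotone submodular set function over a partition matroid. So three things need checking. First, the round objective in each case is a monotone submodular function on the ground set of (robot, candidate policy) pairs. Second, the quantity each robot maximizes in Algorithm~1 equals the exact marginal gain of that objective given the committed teammates. Third, Fisher's locally greedy theorem applies in the committed order.

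\textbf{Step 1: identify the objective and its submodularity.} In cases (a) and (c) the objective is $\mathcal{I}^{\mathrm{joint}}$ under the exact centralized posterior, which by Proposition~\ref{thm:fusion} is the shared belief $q_{\etaG}$. Monotonicity and submodularity follow from the conditional independence of the streams given $\bth$ (Assumption~\ref{ass:ci}), as in Proposition~\ref{prop:chain}(b).

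In case (b) the objective is the novelty objective \eqref{eq:dirteam}. By Lemma~\ref{lem:mn}(i) I expect it to have the form $\sum_j \sum_{t=0}^{n_j-1} \frac{M_j-1}{2(N_j+t)}$, where $n_j$ is the total planned visit count to context $j$. Each summand is a concave nondecreasing function of $n_j$, and $n_j$ is a modular (additive) function of the selected pairs. A concave nondecreasing function of a nonnegative modular function is monotone submodular, and sums preserve this, so the objective is monotone submodular. This is the step I would check most carefully. The closed concave form depends on deterministic visit counts, which the hypothesis supplies.

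\textbf{Step 2: the marginals are exact.} Under the stated hypotheses, Theorem~\ref{thm:exact} gives $\E[f(\etaG+\Delta)] = f(\etaG+\E[\Delta])$ in cases (a) and (b). So \eqref{eq:counterfactual} equals $\tilde{\mathcal{I}}^i(\pi^i\mid\pi^{1:i-1})$ for the respective objective. In case (b), the conditional gain of robot $i$ is the increment of \eqref{eq:dirteam} when $n_j$ grows by robot $i$'s visits, starting from $N_j+\sum_{k<i} n^k_j$. This is exactly what the anticipated Dirichlet belief with total count $N_j+\bar n_j$ evaluates.

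In case (c), the finite sum $\sum_\delta P(\Delta=\delta) f(\etaG+\delta)$ is by definition the conditional mutual information. In all three cases, the chain rule (Proposition~\ref{prop:chain}(a)) then identifies robot $i$'s choice as the greedy marginal-gain maximizer for its partition block, given the earlier picks.

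\textbf{Step 3: conclude.} Because the robots plan by the gain alone, each robot's $\tilde G^i$ is minimized exactly at the maximizer of the conditional gain. Sequential commitment in the fixed order is then Fisher's locally greedy heuristic on a partition matroid, which attains at least half the optimum over one-policy-per-robot assignments. For the full objective, I would add the nonnegative monotone submodular task term and the modular own-state term, invoking closure of monotone submodularity under sums exactly as in Proposition~\ref{prop:chain}(b). One subtlety needs care: the guarantee is stated per round with respect to the round's shared posterior. Previous rounds only change $\etaG$, and by Proposition~\ref{thm:fusion} the new $\etaG$ is again the exact centralized posterior, so the argument repeats round by round.
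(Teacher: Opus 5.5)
Your proposal is correct and follows the paper's own argument. Proposition~\ref{thm:fusion} makes the shared belief the centralized posterior at the start of every round, Proposition~\ref{prop:chain}(b) supplies the locally greedy $1/2$ bound for the exact mutual information in cases (a) and (c), and the monotone submodularity of \eqref{eq:dirteam}, whose marginal gains are the anticipated-count gains, supplies it for the novelty objective in case (b); you simply spell out the steps the paper's three-line proof only cites, namely concavity of the digamma sum in the modular visit counts, exactness of the marginals via Theorem~\ref{thm:exact}, and the finite sum as conditional mutual information.
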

\begin{proof}
Proposition~\ref{thm:fusion} makes the shared belief the centralized posterior at the start of each round, Proposition~\ref{prop:chain}(b) gives the guarantee for the exact mutual information, and \eqref{eq:dirteam} gives it for the novelty objective, whose marginal gains are the corrected gains of the Dirichlet instantiation.
\end{proof}

\begin{remark}[Consumable resources]\label{rem:channels}
When plans consume shared resources, such as a rock that can be sampled once or an item collected by the first robot to arrive, the rollout of robot $i$ should also predict the environment as the committed teammates will leave it. Consumption enters \eqref{eq:efe} through the task term only and leaves Assumption~\ref{ass:ci} unaffected, so we let the shared summary be $z = (\eta, \rho)$, with $\rho$ the consumable resources, and let the anticipated message carry the expected change of both: anticipated evidence lowers the epistemic term where teammates will observe, anticipated consumption lowers the pragmatic term where teammates will collect, and Sec.~\ref{sec:experiments} measures the two effects separately.
\end{remark}

\section{Three Conjugate Families}\label{sec:inst}
\emph{Dirichlet counts.} For categorical models such as the likelihood and transition arrays of a discrete active inference model, with $q(\bth_{\cdot j}) = \mathrm{Dir}(\mathbf{a}_{\cdot j})$ per column, the increment is the tally $\sum_\tau o_\tau \otimes s_\tau$ and fusion reads $\mathbf{a}^G \leftarrow \mathbf{a}^G + \sum_i \Delta\mathbf{a}^i$. Tallies are nonnegative, and the total pseudo-count equals the prior count plus the number of transitions the team has experienced, a conservation that naive fusion violates by counting the prior $n$ times. By Lemma~\ref{lem:mn} the novelty gain of one observation in context $j$ is $(M_j-1)/(2N_j)$, which depends on a teammate's anticipated tallies only through their column sums, the expected numbers of visits to each context. A plan that visits context $j$ $m$ times, with the counts updated along the rollout, has gain $\sum_{k=0}^{m-1} (M_j-1)/(2(N_j+k)) = \tfrac12 (M_j-1)\,[\psi(N_j+m) - \psi(N_j)]$ by $\psi(x{+}1) - \psi(x) = 1/x$, and after $n_j$ anticipated teammate visits its corrected gain is $\tfrac12 (M_j-1)\,[\psi(N_j+n_j+m) - \psi(N_j+n_j)]$. Summing the corrected gains along the commitment order telescopes, so the team objective under the novelty weights is
\begin{equation}\label{eq:dirteam}
\tilde{\mathcal{I}}^{\mathrm{joint}}(\boldsymbol{\pi}) =
\tfrac12 \sum_j (M_j-1)\,\big[\psi\big(N_j + n_j(\boldsymbol{\pi})\big)
- \psi(N_j)\big],
\end{equation}
where $n_j(\boldsymbol{\pi})$ is the total number of planned visits to context $j$. Each $n_j$ is a modular function of the committed plans and $\psi$ is increasing and concave, so \eqref{eq:dirteam} is monotone submodular, the corrected gain of robot $i$ is its marginal gain, and Proposition~\ref{prop:chain}(b) applies to the novelty objective with the same $1/2$ guarantee. The anticipated message of robot $i$ thus reduces to its planned visit counts $(\bar n^i_j)_j$, from which every receiver reconstructs $\widehat{\Delta\mathbf{a}}^{i}_{\cdot j} = \bar n^i_j\, \mathbf{a}^G_{\cdot j}/N_j$. By \eqref{eq:novbound} the objective \eqref{eq:dirteam} and the corrected gains built from it are within $O\big(\sum_m a_{mj}^{-1}/N_j\big)$ per planned observation of the exact mutual information, a term that vanishes faster than the gain itself as evidence accumulates.

\emph{Gaussian information form.} Let the unknown be a function $f(x) = w^\top\phi(x)$ observed through $y_\tau = f(x_\tau) + \varepsilon_\tau$ with $\varepsilon_\tau \sim \cN(0,\sigma^2)$ and conjugate prior $w \sim \cN(\mu_0,\Lambda_0^{-1})$. The natural parameter is $(\Lambda\mu, \Lambda)$, the increment is $(\sum_\tau \phi_\tau y_\tau, \sum_\tau \phi_\tau\phi_\tau^\top)/\sigma^2$. The gain of sampling at $x$ is $\tfrac12\ln(1 + \sigma_f^2(x)/\sigma^2)$ with $\sigma_f^2(x) = \phi(x)^\top\Lambda^{-1}\phi(x)$. The anticipated increment is the predicted information matrix of decentralized sensor control \cite{grocholsky03,atanasov15} and the hallucinated observation of batch Bayesian optimization \cite{desautels14,ginsbourger10}; Theorem~\ref{thm:exact}(a) states that it is exact for deterministic sampling paths, and submodularity of the joint gain is classical \cite{krause05,singh07}.

\emph{Finite hypothesis classes.} When $\Theta = \{\bth_1,\dots, \bth_L\}$, as in \cite{wang25}, the natural parameter is the vector of log posterior probabilities, the increment the vector of log-likelihoods, and fusion \eqref{eq:fusion} adds log-likelihood vectors (for $L = 2$ the log-odds recursion of Sec.~\ref{sec:exp-rs}); by Theorem~\ref{thm:exact}(c) the receiver evaluates the finite sum instead of the mean increment, at $m+1$ terms per unknown for $m$ planned readings.

\section{Algorithm}\label{sec:algo}
Algorithm~1 assembles the two corrections into one synchronization loop, which we call \textsc{FedAIF-AE} (federated active inference with anticipated evidence).

\begin{figure}[t]
\centering
\fbox{\begin{minipage}{0.94\columnwidth}\small
\textbf{Algorithm 1: \textsc{FedAIF-AE}} (one synchronization round)
\smallskip\hrule\smallskip
\textbf{Given:} shared parameter $\etaG$, robot order $1,\dots,n$,
candidate policy sets $\Pi^i$, horizon $T$.
\begin{enumerate}\itemsep1pt
\item \textbf{Plan} (sequential commitment). For $i = 1,\dots,n$:
  form $\tilde\eta^{<i} = \etaG + \sum_{j<i}\widehat{\Delta\eta}^{j}$;
  for each $\pi \in \Pi^i$ roll out under $q_{\tilde\eta^{<i}}$ and
  evaluate the coordinated EFE (Def.~\ref{def:defe} via
  \eqref{eq:counterfactual}, or via the finite sum of
  Thm.~\ref{thm:exact}(c) for finite hypothesis classes); commit
  $\pi^i$ minimizing it; broadcast the anticipated message ($\widehat{\Delta\eta}^{i}$ of
  \eqref{eq:anticipated}; planned visit counts in the Dirichlet family;
  number and accuracy of planned readings for finite classes).
\item \textbf{Act} ($T$ steps, decentralized). Execute $\pi^i$; accumulate the realized
  increment $\Delta\eta^i \mathrel{+}= \Delta\eta(o^i_t, s^i_t)$.
\item \textbf{Fuse.} Broadcast $\Delta\eta^i$; all update
  $\etaG \leftarrow \etaG + \sum_i \Delta\eta^i$
  (Prop.~\ref{thm:fusion}); reset $\Delta\eta^i \leftarrow 0$.
\end{enumerate}
\end{minipage}}
\end{figure}

\emph{Complexity and communication.} Per round each robot broadcasts two sparse vectors of dimension $\dim\eta$ (plus $\dim\rho$ under Remark~\ref{rem:channels}), one anticipated and one realized. No observations, states, trajectories, or policies are transmitted, and a receiver applies an increment without the sender's sensor or motion model, since it is expressed in the coordinates of the shared belief. In the experiments of Sec.~\ref{sec:experiments} (2-byte index and 4-byte value per sparse entry) the anticipated and realized messages are $7.0$ and $4.2$ bytes in RockSample, $2$--$6$ and $1$--$3$ bytes in foraging, and $18.3$ and $18.9$\,kB in monitoring, where the sample locations exchanged by \cite{singh07} ($11.5$ bytes) and even the raw observations ($17.2$ bytes) are far more compact: with a large feature basis the increment trades bandwidth for independence from the sender's model. Fusion costs $O(n\dim\eta)$; planning adds one rollout per candidate policy, as in standard active inference, plus $O(m)$ terms per unknown for the finite sum of Theorem~\ref{thm:exact}(c), and sequential commitment requires $n$ short message rounds ($0.85$, $0.07$--$0.78$, and $13.7$\,ms per round in the three testbeds). With $n = 1$ both corrections vanish.

\section{Experiments}\label{sec:experiments}
We evaluate on three testbeds, one per case of Theorem~\ref{thm:exact}: a cooperative extension of RockSample \cite{smith04}, whose rock qualities form a finite hypothesis class (case c); a foraging task in the setting of \cite{tom25}, in a Dirichlet variant with unknown site yields (case b) and in its binary variant compared against centralized joint planning; and multi-robot field monitoring with Gaussian beliefs (case a) on synthetic and real fields.

\emph{Design.} The central prediction is that exact fusion cannot reduce planning redundancy (Prop.~\ref{thm:tc}), whereas anticipated evidence can. Fusion and planning are therefore manipulated independently, fusion \{naive (\textsc{NF}), corrected (\textsc{CF})\} $\times$ planning \{naive (\textsc{NP}), coordinated (\textsc{CP})\}, bracketed by \textsc{Oracle} (true $\bth$ known, pragmatic planning only) and \textsc{Local} (no communication). Metrics are team reward; the $\KL$ divergence of the team belief from the centralized posterior; behavioral redundancy, the number of (round, target) pairs in which two or more robots gather evidence about the same unknown, a proxy for $R(\boldsymbol\pi)$; the fraction of the round-optimal team objective attained by sequential commitment (Cor.~\ref{cor:round}); and communication and planning time. All arms share environment seeds within a trial, so comparisons are paired; we report means with 95\% confidence intervals and paired Wilcoxon signed-rank tests, and code and logs will be released.

\subsection{Cooperative RockSample}\label{sec:exp-rs}

\begin{table}[t]
\centering\footnotesize
\setlength{\tabcolsep}{3pt}
\begin{tabular}{@{}lcccc@{}}
\toprule
arm & reward & co-sense & $\KL$ & exact.\ err \\
\midrule
\textsc{Local} & $28.01 \pm 0.75$ & $11.68$ & --- & --- \\
\textsc{NF-NP} & $31.44 \pm 0.78$ & $10.23$ & $0.869$ & --- \\
\textsc{CF-NP} & $31.44 \pm 0.79$ & $10.13$ & $0.000$ & $10^{-12}$ \\
\textsc{NF-CP-mean} & $34.18 \pm 0.79$ & $5.92$ & $0.981$ & --- \\
\textsc{CF-CP-mean} & $34.08 \pm 0.80$ & $6.08$ & $0.000$ & $10^{-12}$ \\
\textsc{NF-CP-exact} & $34.23 \pm 0.81$ & $5.71$ & $1.216$ & --- \\
\textsc{CF-CP-exact} & $\mathbf{34.24 \pm 0.81}$ & $\mathbf{5.79}$ &
$\mathbf{0.000}$ & $10^{-12}$ \\
\textsc{Oracle} & $43.65 \pm 0.71$ & $2.44$ & --- & --- \\
\bottomrule
\end{tabular}
\caption{RS$(7,8)$, $n{=}3$, 1{,}000 paired trials (mean $\pm$ 95\%
CI).}
\label{tab:rs}
\end{table}

\emph{Setup.} Layout-randomized RS$(7,8)$ and RS$(11,11)$: rock positions uniform per trial, qualities Bernoulli$(0.5)$, $n \in \{2,3,5\}$ rovers starting at the west edge, deterministic moves, sensing correct with probability $\eta(d) = \tfrac12 + \tfrac12\cdot 2^{-d/20}$, rewards $+10$/$-10$ per sample and $+10$ for exit, $\gamma = 0.95$, horizon 100, $T = 5$. Beliefs are log-odds updated by log-likelihood ratios. Candidate policies are, per unsampled rock, the shortest path with two en-route readings and a sample-if-favorable terminal, with pragmatic value computed exactly by outcome enumeration, plus exit; commitment order is randomized per round and the common prior is $p_r = 0.5$. \textsc{CP-exact} evaluates the conditional gain by the finite sum of Theorem~\ref{thm:exact}(c) over the readings that committed teammates plan on each rock; \textsc{CP-mean} uses the mean increment of \eqref{eq:anticipated}, which Theorem~\ref{thm:exact}(c) predicts to carry no epistemic information at the common prior. Both arms share the pragmatic channel of Remark~\ref{rem:channels}.

\emph{Results} (Table~\ref{tab:rs}). Corrected arms match the centralized posterior to $10^{-12}$ (Table~\ref{tab:rs}); naive fusion has $\KL = 0.87$--$1.22$. Fusion alone changes neither redundancy ($-0.10 \pm 0.29$, $p = 0.52$) nor reward ($-0.00 \pm 0.21$, $p = 0.89$), whereas coordination with the exact gain removes $4.35 \pm 0.58$ co-sensing events ($p < 10^{-70}$) and adds $2.80 \pm 0.32$ reward ($p \approx 10^{-54}$); the effect replicates on RS$(11,11)$ ($+2.91 \pm 0.49$) and grows with $n$ (gap $+8.2$ at $n{=}5$, where naive co-sensing rises from $7.9$ to $14.4$). The exact and mean arms are statistically indistinguishable ($+0.17 \pm 0.26$ reward), because the epistemic term of a plan is at most $\ln 2$ nats per reading while the pragmatic stakes are $\pm 10$.

With a well-specified prior, naive fusion biases the belief ($\KL$) but not the reward. Under a misspecified common prior (log-odds $-1.5$) the fusion correction becomes decisive: reward rises from $26.5$ to $30.9$ and belief accuracy from $0.72$ to $0.95$, because naive fusion re-counts the wrong prior at every synchronization. Disabling each channel in turn (400 trials) separates the two mechanisms. With the mean increment the epistemic channel alone leaves co-sensing at $10.8$, the value without coordination, as Theorem~\ref{thm:exact}(c) predicts at the common prior; with the exact finite sum it reduces co-sensing to $10.4$, the pragmatic channel alone to $7.0$, and both channels together to $6.2$ ($6.4$ with the mean increment). The epistemic channel is thus active but small in this consumable-reward domain; Sec.~\ref{sec:exp-mon} is the complementary case.

\subsection{Foraging: Dirichlet Yields and Centralized Joint Planning}\label{sec:exp-tom}
\emph{Setup.} $k \times k$ grids hold $n$ apple sites; the step cost is $0.3$, the horizon $3k$, and idling is allowed. In the \emph{binary variant} (the setting of \cite{tom25}) one site is known, the others hold an apple with probability $0.5$, the first arrival collects it ($+10$), and readings of accuracy $0.9$ are obtained within one cell. In the \emph{Dirichlet variant} sites are not consumed: each step the first robot to arrive harvests once and observes a yield in $\{0,1,2\}$ apples ($5$ per apple) drawn from the site's unknown yield distribution $\bth_s \sim \mathrm{Dir}(1,1,1)$, beliefs are Dirichlet counts with prior $(1,1,1)$ and one known good site, and one commitment is one planned harvest, so visit counts are deterministic and Theorem~\ref{thm:exact}(b) applies. Three planners share model and preferences and differ only in coordination: \emph{naive}; \emph{joint}, a centralized enumeration of all assignments of one candidate policy per robot at cost $O((|\Pi|{+}1)^n)$, which attains the round-optimal team objective and upper-bounds every decentralized planner over the same policy space, including the recursive planner of \cite{tom25}; and \textsc{CF-CP} at cost $O(n|\Pi|)$, with conditional gains by the finite sum of Theorem~\ref{thm:exact}(c) in the binary variant and by the mean count increment in the Dirichlet variant; 300 paired trials, 150 for $n \ge 4$.

\begin{figure*}[t]
\centering
\includegraphics[width=0.58\textwidth]{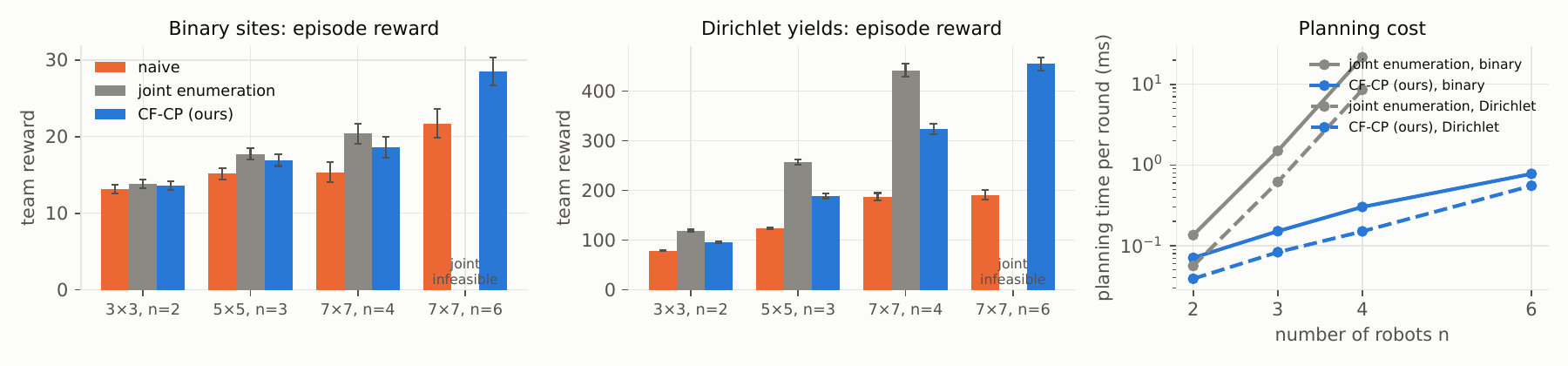}
\caption{Foraging, binary (left) and Dirichlet (middle) variants, and planning cost per round (right): \textsc{CF-CP} recovers half to two thirds of the gap between naive planning and centralized joint enumeration at linear cost, and remains usable at $n{=}6$, where enumeration is infeasible.}
\label{fig:tom}
\end{figure*}
\emph{Results} (Fig.~\ref{fig:tom}). In the binary variant at $3{\times}3$, $n{=}2$, \textsc{CF-CP} and joint enumeration are within noise ($13.61 \pm 0.56$ against $13.88 \pm 0.56$; naive $13.15$), and \textsc{CF-CP} co-targets in $0.34$ of rounds against $0.82$ for naive. At larger scale joint enumeration keeps a small advantage ($17.75$ against $16.93$ at $n{=}3$; $20.40$ against $18.62$ at $n{=}4$; naive $15.2$ and $15.4$) while the costs separate: enumeration grows tenfold per added robot ($0.14$, $1.5$, $22$\,ms per round) and is not run at $n{=}6$, whereas \textsc{CF-CP} grows linearly ($0.07$ to $0.78$\,ms) and at $n{=}6$ exceeds naive by $6.8$ ($28.5$ against $21.8$). Sequential commitment attains $87\%$, $83\%$, and $78\%$ of the round-optimal team objective at $n = 2, 3, 4$, and $91$--$98\%$ of the joint planner's reward, well above the $1/2$ bound of Prop.~\ref{prop:chain}(b). In the Dirichlet variant \textsc{CF-CP} earns $96$, $189$, and $323$ against $119$, $257$, and $442$ for joint enumeration and $79$, $123$, and $187$ for naive at $n = 2, 3, 4$, which is $81$--$85\%$ of the round-optimal objective; at $n{=}6$ it earns $454$ against $191$ for naive. The mean count increment reproduced the enumerated conditional gain to $10^{-16}$ on every commitment, as Theorem~\ref{thm:exact}(b) states.

\subsection{Multi-Robot Field Monitoring}\label{sec:exp-mon}

\emph{Setup.} $n{=}3$ waypoint robots monitor a scalar field on the unit square, sampling along their paths ($\sigma = 0.1$, one sample per $0.15$ of travel) under a travel budget of $4.0$. Ground truth fields are seeded draws from the model class (RBF mixtures, $m = 144$ features, lengthscale $0.12$). Waypoints are chosen from an $8{\times}8$ grid by novelty minus travel cost, fusion occurs every round, and RMSE is measured on a held-out $40{\times}40$ grid over 100 paired seeds. Coordination here acts through the epistemic term alone (Rem.~\ref{rem:channels}). Fusion baselines are the naive product (\textsc{NF}), covariance intersection with equal weights (\textsc{CI}, $\Lambda \leftarrow \Lambda^G + \tfrac1n\sum_i \Delta\Lambda^i$), the conservative rule for unknown common information, and exact additive fusion (\textsc{CF}). Planning baselines are independent greedy planning (\textsc{NP}), a Voronoi partition of the domain by robot positions with greedy planning restricted to each cell \cite{kemna17}, a static serpentine \textsc{Lawnmower}, and \textsc{CP}. By Theorem~\ref{thm:exact}(a), \textsc{CF-CP} selects the same plans as sequential greedy conditional mutual information with trajectory sharing \cite{singh07}, which we verified on every seed; the two differ only in the message.

\begin{figure*}[t]
\centering
\includegraphics[width=0.58\textwidth]{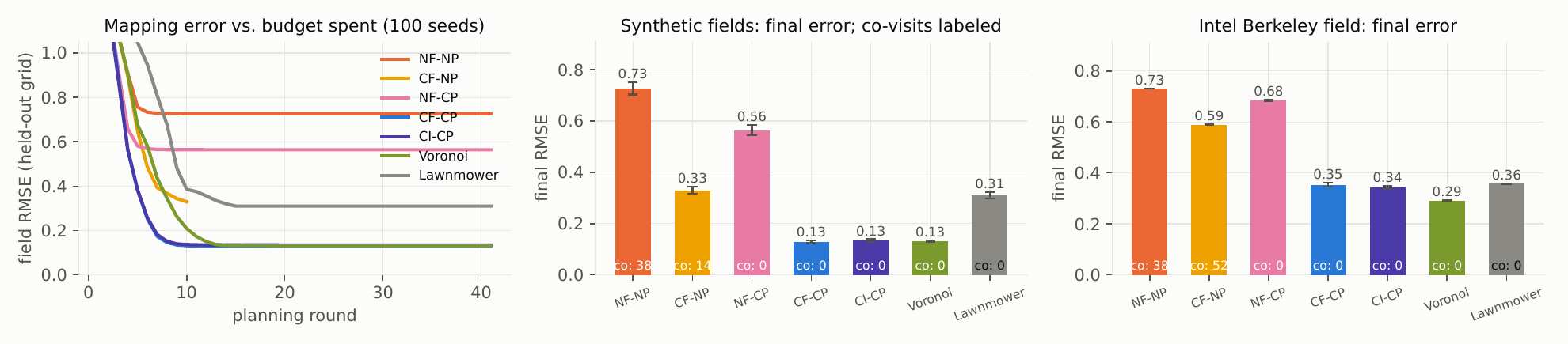}
\caption{Monitoring, 100 paired seeds: mapping error against budget on synthetic fields (left); final RMSE with co-visit counts on synthetic fields (middle) and on the Intel Berkeley temperature field (right). Both corrections are necessary; under the model mismatch of the real field the Voronoi partition overtakes coordinated planning.}
\label{fig:mon-quant}
\end{figure*}

\emph{Results} (Fig.~\ref{fig:mon-quant}). Coordination reduces co-committed waypoints from 14 per episode to zero and final RMSE from $0.330 \pm 0.014$ to $0.129 \pm 0.005$ (paired $-0.200 \pm 0.014$), a factor of $2.4$ below \textsc{Lawnmower} ($0.310 \pm 0.013$). The fusion correction matters more here than in Sec.~\ref{sec:exp-rs} even with a correct prior: naive fusion multiplies the prior precision by $n$ each round, predictive variance and information gain collapse, exploration stops, and RMSE remains at $0.727 \pm 0.023$; under \textsc{NF-CP} it is $0.564$ despite zero co-visits, because robots sent to different locations still plan under the same overconfident uncertainty. Covariance intersection reaches $0.134 \pm 0.005$ and the Voronoi partition $0.131 \pm 0.003$, both within noise of \textsc{CF-CP} (paired $0.005 \pm 0.006$ and $0.002 \pm 0.005$): on fields drawn from the model class, a conservative rule that inflates the variance by about $n$ while leaving the mean unchanged costs little mapping error, and a geometric partition of three robots on a square is nearly as good as information-driven coordination. Exact fusion remains the only rule that matches the centralized posterior. Theorem~\ref{thm:exact}(a) held exactly on every committed plan.

\subsubsection{Real field}\label{sec:exp-mon-real} The same protocol was run on the Intel Berkeley temperature field \cite{intel04}: the ten-minute window with the most reporting motes (53 readings), coordinates mapped to the unit square, temperatures standardized, ground truth by thin-plate-spline interpolation on the $40{\times}40$ grid, and lengthscale ($0.065$) and prior precision ($0.90$) fitted by marginal likelihood. Final RMSE over 100 paired seeds is $0.731$ (\textsc{NF-NP}), $0.589$ (\textsc{CF-NP}), $0.684$ (\textsc{NF-CP}), $0.354 \pm 0.007$ (\textsc{CF-CP}), $0.343 \pm 0.007$ (\textsc{CI-CP}), $0.292 \pm 0.002$ (Voronoi), and $0.357$ (\textsc{Lawnmower}). Both corrections remain necessary (\textsc{CF-CP} improves on \textsc{CF-NP} by $0.236 \pm 0.007$ and on \textsc{NF-CP} by $0.33$), but the ranking among the coordinated and partition-based planners changes: the fitted lengthscale is below the feature spacing, so the model is misspecified, and the Voronoi partition, which spreads samples without consulting the model, is now best, with covariance intersection and \textsc{Lawnmower} within noise of \textsc{CF-CP}.

\section{Conclusion}\label{sec:conclusion} The common belief of a robot team enters every local posterior at fusion and every robot's information gain at planning. Adding evidence increments to the shared natural parameter, realized at fusion and expected at planning, removes both errors, and the expected increment is an exact message whenever the gain depends on the increment only through a component that the plan fixes, which holds for Gaussian and Dirichlet beliefs and fails for finite hypothesis classes. Exact fusion alone leaves exploration redundancy unchanged; in field monitoring the mapping error of centralized planning is reached only with both corrections; sequential commitment recovers most of the value of joint planning at linear cost; and on the real temperature field, where the model is misspecified, a Voronoi partition performs better than information-driven coordination.

\emph{Limitations.} Conjugacy excludes learned belief representations, and conditional independence given $\bth$ excludes correlated sensor noise, which would require tracking the provenance of shared evidence \cite{dagan23}. Under partial communication each robot must record which increments it has already added, as the channel filter does for tree networks \cite{grime94}. Sequential commitment needs $n$ message rounds before the team acts; distributed sequential greedy assignment \cite{corah17} lets groups commit in parallel at the price of a weaker bound. The theory-of-mind baseline is a joint-enumeration surrogate for \cite{tom25}, and the real-field experiment uses one interpolated time slice.


\begin{thebibliography}{99}\footnotesize

\bibitem{wang25} Y.~Wang, P.~Wu, M.~Imani. Federated posterior sharing
for multi-agent systems in uncertain environments. \emph{L4DC}, PMLR
283, 2025.

\bibitem{dacosta20} L.~Da~Costa, T.~Parr, N.~Sajid, S.~Veselic,
V.~Neacsu, K.~Friston. Active inference on discrete state-spaces: a
synthesis. \emph{J.\ Math.\ Psych.}, 99:102447, 2020.

\bibitem{friston17} K.~Friston, M.~Lin, C.~Frith, G.~Pezzulo,
J.~Hobson, S.~Ondobaka. Active inference, curiosity and insight.
\emph{Neural Comput.}, 29(10):2633--2683, 2017.

\bibitem{schwartenbeck19} P.~Schwartenbeck, J.~Passecker, T.~Hauser,
T.~FitzGerald, M.~Kronbichler, K.~Friston. Computational mechanisms of
curiosity and goal-directed exploration. \emph{eLife}, 8:e41703, 2019.

\bibitem{friston24} K.~Friston, T.~Parr, C.~Heins, et al. Federated
inference and belief sharing. \emph{Neurosci.\ Biobehav.\ Rev.},
156:105500, 2024.

\bibitem{tresp00} V.~Tresp. A Bayesian committee machine. \emph{Neural
Comput.}, 12(11):2719--2741, 2000.

\bibitem{grime94} S.~Grime, H.~Durrant-Whyte. Data fusion in
decentralized sensor networks. \emph{Control Eng.\ Practice},
2(5):849--863, 1994.

\bibitem{krause05} A.~Krause, C.~Guestrin. Near-optimal nonmyopic value
of information in graphical models. \emph{UAI}, 2005.

\bibitem{singh07} A.~Singh, A.~Krause, C.~Guestrin, W.~Kaiser,
M.~Batalin. Efficient planning of informative paths for multiple
robots. \emph{IJCAI}, 2007.

\bibitem{fisher78} M.~Fisher, G.~Nemhauser, L.~Wolsey. An analysis of
approximations for maximizing submodular set functions---II.
\emph{Math.\ Prog.\ Studies}, 8:73--87, 1978.

\bibitem{smith04} T.~Smith, R.~Simmons. Heuristic search value
iteration for POMDPs. \emph{UAI}, 2004.

\bibitem{best19} G.~Best, O.~Cliff, T.~Patten, R.~Mettu, R.~Fitch.
Dec-MCTS: Decentralized planning for multi-robot active perception.
\emph{Int.\ J.\ Robot.\ Res.}, 38(2--3):316--337, 2019.

\bibitem{intel04} P.~Bodik, W.~Hong, C.~Guestrin, S.~Madden,
M.~Paskin, R.~Thibaux. Intel Berkeley Research Lab sensor data, 2004.

\bibitem{tom25} R.~J.~Pitliya, O.~\c{C}atal, T.~Van~de~Maele,
C.~Pezzato, T.~Verbelen. Theory of mind using active inference: a
framework for multi-agent cooperation. \emph{Int.\ Workshop on Active
Inference (IWAI 2025)}, Springer CCIS, 2026. arXiv:2508.00401.

\bibitem{wu24} P.~Wu, T.~Imbiriba, V.~Elvira, P.~Closas. Bayesian data
fusion with shared priors. \emph{IEEE Trans.\ Signal Process.},
72:275--288, 2024.

\bibitem{corah18} M.~Corah, N.~Michael. Distributed matroid-constrained
submodular maximization for multi-robot exploration. \emph{Auton.\
Robots}, 43:485--501, 2019.

\bibitem{desautels14} T.~Desautels, A.~Krause, J.~W.~Burdick.
Parallelizing exploration--exploitation tradeoffs in Gaussian process
bandit optimization. \emph{J.\ Mach.\ Learn.\ Res.}, 15:4053--4103,
2014.

\bibitem{ginsbourger10} D.~Ginsbourger, R.~Le~Riche, L.~Carraro.
Kriging is well-suited to parallelize optimization. In
\emph{Computational Intelligence in Expensive Optimization Problems},
Springer, 2010.

\bibitem{burgard05} W.~Burgard, M.~Moors, C.~Stachniss, F.~Schneider.
Coordinated multi-robot exploration. \emph{IEEE Trans.\ Robot.},
21(3):376--386, 2005.

\bibitem{dimakopoulou18} M.~Dimakopoulou, B.~Van~Roy. Coordinated
exploration in concurrent reinforcement learning. \emph{ICML}, 2018.

\bibitem{catal24} O.~\c{C}atal, T.~Van~de~Maele, R.~J.~Pitliya,
M.~Albarracin, C.~Pattisapu, T.~Verbelen. Belief sharing: a blessing or
a curse. \emph{Int.\ Workshop on Active Inference}, 2024.

\bibitem{dagan23} O.~Dagan, N.~R.~Ahmed. Exact and approximate
heterogeneous Bayesian decentralized data fusion. \emph{IEEE Trans.\
Robot.}, 39, 2023.

\bibitem{schlotfeldt18} B.~Schlotfeldt, D.~Thakur, N.~Atanasov,
V.~Kumar, G.~J.~Pappas. Anytime planning for decentralized multi-robot
active information gathering. \emph{IEEE Robot.\ Autom.\ Lett.},
3(2):1025--1032, 2018.

\bibitem{kantaros21} Y.~Kantaros, B.~Schlotfeldt, N.~Atanasov,
G.~J.~Pappas. Sampling-based planning for non-myopic multi-robot
information gathering. \emph{Auton.\ Robots}, 45, 2021.

\bibitem{kemna17} S.~Kemna, J.~G.~Rogers, C.~Nieto-Granda, S.~Young,
G.~S.~Sukhatme. Multi-robot coordination through dynamic Voronoi
partitioning for informative adaptive sampling in
communication-constrained environments. \emph{ICRA}, 2017.


\bibitem{grocholsky03} B.~Grocholsky, A.~Makarenko, H.~Durrant-Whyte.
Information-theoretic coordinated control of multiple sensor
platforms. \emph{ICRA}, pp.~1521--1526, 2003.

\bibitem{atanasov15} N.~Atanasov, J.~Le~Ny, K.~Daniilidis,
G.~J.~Pappas. Decentralized active information acquisition: Theory and
application to multi-robot SLAM. \emph{ICRA}, pp.~4775--4782, 2015.

\bibitem{wakayama25} S.~Wakayama, A.~Candela, P.~Hayne, N.~Ahmed.
Active inference for bandit-based autonomous robotic exploration with
dynamic preferences. \emph{IEEE Trans.\ Robot.}, 41:3841--3851, 2025.

\bibitem{diaconis79} P.~Diaconis, D.~Ylvisaker. Conjugate priors for
exponential families. \emph{Ann.\ Statist.}, 7(2):269--281, 1979.
\bibitem{ashman22} M.~Ashman, T.~D.~Bui, C.~V.~Nguyen, S.~Markou,
A.~Weller, S.~Swaroop, R.~E.~Turner. Partitioned variational inference:
a framework for probabilistic federated learning. arXiv:2202.12275,
2022.

\bibitem{corah17} M.~Corah, N.~Michael. Efficient online multi-robot
exploration via distributed sequential greedy assignment.
\emph{Robotics: Science and Systems}, 2017.

\bibitem{alzer97} H.~Alzer. On some inequalities for the gamma and psi
functions. \emph{Math.\ Comp.}, 66(217):373--389, 1997.

\end{thebibliography}
\end{document}